\documentclass[review,onefignum,onetabnum]{siamonline171218}


\usepackage{booktabs} 
\usepackage{amsmath}
\usepackage{amssymb}
\usepackage{multirow}
\usepackage{multicol}
\usepackage{soul}

\usepackage{mathtools, cuted}

\DeclareMathOperator*{\argmin}{arg\,min}

\usepackage{tikz}
\usetikzlibrary{shapes.geometric, arrows}
\tikzstyle{normal} = [rectangle, rounded corners, minimum width=1.0cm, minimum height=0.8cm,text centered, draw=black]
\tikzstyle{arrow} = [thick,->,>=stealth]


\usepackage{lipsum}
\usepackage{amsfonts}
\usepackage{graphicx}
\usepackage{epstopdf}

\usepackage{algorithm}
\usepackage[noend]{algpseudocode}

\ifpdf
  \DeclareGraphicsExtensions{.eps,.pdf,.png,.jpg}
\else
  \DeclareGraphicsExtensions{.eps}
\fi

\usepackage{enumitem}
\setlist[enumerate]{leftmargin=.5in}
\setlist[itemize]{leftmargin=.5in}


\newsiamremark{remark}{Remark}
\newsiamremark{hypothesis}{Hypothesis}
\crefname{hypothesis}{Hypothesis}{Hypotheses}
\newsiamthm{claim}{Claim}

\headers{Mathematical Analysis of Adversarial Attacks}{Z. Dou, and S. Osher, and B. Wang}

\title{Mathematical Analysis of Adversarial Attacks\thanks{Submitted to the editors DATE.
}}


\author{Zehao Dou\thanks{School of Mathematical Science,
       Peking University, Beijing, China
  (\email{zehaodou@pku.edu.cn}).
  }
\and Stanley J. Osher and Bao Wang\thanks{Department of Mathematics, UCLA, Los Angeles, CA, 90095-1555
  (\email{sjo@math.ucla.edu}, \email{wangbaonj@gmail.com}). 
  }
}


\usepackage{amsopn}


\ifpdf
\hypersetup{
  pdftitle={Mathematical Analysis of Adversarial Attacks},
  pdfauthor={Z. Dou, S. Osher and B. Wang}
}
\fi


\externaldocument{ex_supplement}


\begin{document}

\maketitle

\begin{abstract}
In this paper, we analyze the efficacy of the fast gradient sign method (FGSM) and the Carlini-Wagner's $L_2$ (CW-L2) attack. We prove that, within a specific regime, the untargeted FGSM can fool any convolutional neural nets (CNN) with ReLU activation; the targeted FGSM can mislead any CNN with ReLU activation to classify any given image into any prescribed class. For a particular two-layer neural nets, a linear layer followed by the softmax output activation, we show that the CW-L2 attack increases the ratio of the classification probability between the target and the ground truth classes. Moreover, we provide numerical results to verify our theoretical results.
\end{abstract}

\begin{keywords}
Adversarial Attack, Deep Learning.
\end{keywords}
\begin{AMS}
  41A63, 68T05, 82C32
\end{AMS}

\section{Introduction}
The adversarial vulnerability \cite{Szegedy:2013} of deep neural nets (DNN) threatens their applicability in security critical tasks, e.g., autonomous cars \cite{Akhtar:2018}, robotics \cite{Giusti:2016Drones}, DNN-based malware detection systems
 \cite{PapernotSecurity:2016,PapernotMalware:2016}. Since the pioneering work by Szegedy et al. \cite{Szegedy:2013}, many advanced adversarial attack schemes have been devised to generate imperceptible perturbations to sufficiently fool the DNN \cite{Goodfellow:2014AdversarialTraining,PapernotAttack:2016,CWAttack:2016}. Not only are adversarial attacks successful in white-box attacks, i.e., when the adversary has access to the DNN parameters, but they are also successful in black-box attacks, i.e., it has no access to the parameters. Black-box attacks are successful because one can perturb an image so it misclassifies on one DNN, and the same perturbed image also has a significant chance to be misclassified by another DNN; this is known as transferability of adversarial examples \cite{DBLP:journals/corr/PapernotMG16}. Due to this transferability, it is very easy to attack neural nets in a black-box fashion \cite{Brendel:2017}. In fact, there exist universal perturbations that can imperceptibly perturb any image and cause misclassification for any given network \cite{Moosavi-Dezfooli_2017_CVPR}. There is much recent research on designing advanced adversarial attacks and defending against adversarial perturbation. 

Defensive distillation was recently proposed to increase the stability of DNN \cite{PapernotDistillationDefense:2016}, and a related approach \cite{tramer2018ensemble} cleverly modifies the training data to increase robustness against black-box attacks, and adversarial attacks in general. To counter the adversarial perturbations, Guo et al. \cite{ChuanGuo:2018}, proposed to use image transformation, e.g., bit-depth reduction, JPEG compression, TVM, and image quilting. Adversarial training is another family of defense methods to improve the stability of DNN \cite{Goodfellow:2014AdversarialTraining}. In particular, the projected gradient descent (PGD) adversarial training achieves state-of-the-art guaranteed resistance to the first order attack \cite{Madry:2018}. Generative adversarial nets (GANs) are also employed for adversarial defense \cite{Samangouei:2018}. In \cite{Athalye:2018}, the authors proposed an approximated gradient to attack the defenses that is based on the obfuscated gradient. Wang et al. \cite{BaoWang:2018NIPS,BaoWang:2018Defense}, introduce a data dependent activation to defend against adversarial attacks, joint with total variation minimization, training data augmentation, and the PGD adversarial training, state-of-the-art adversarial defense results are achieved. More recently, motivated by the Feynman-Kac formalism, Wang et al. \cite{BaoWang:2018EnResNet}, proposed a novel neural nets ensemble algorithm which significantly improves the guaranteed robustness towards the first order adversarial attack.


In this paper, we analyze the efficacy of the fast gradient sign method (FGSM) \cite{Goodfellow:2014AdversarialTraining,Kurakin:2016} and the Carlini-Wagner's L$_2$ (CW-L2) attack \cite{CWAttack:2016}. FGSM belongs to the fixed perturbation attack, while CW-L2 attack belongs to the zero-confidence attack. For FGSM, we consider both the targeted and the untargeted attacks. We prove that, within a specific regime, the untargeted FGSM can fool any convolutional neural nets (CNN) with ReLU activation; the targeted FGSM can mislead any CNN with ReLU activation to classify any given image into any prescribed class. For a two-layer neural nets, a linear layer followed by the softmax output activation, we show that the CW-L2 attack increases the ratio of the classification probability between the target and ground truth classes. Our theoretical results give guidance on applying different attacks to attack neural nets, especially, the targeted ones.

This paper is structured in the following way: In section \ref{Review-of-Attack}, we give a review of the well known adversarial attack schemes and briefly discuss the mathematical principle behind them. We analyze the untargeted FGSM, the targeted FGSM, and the CW-L2 attacks, respectively, in sections \ref{Untargeted-FGSM}, \ref{Target-FGSM}, \ref{Target-CWL2}. We verify our theoretical results numerically in section \ref{Numerical-Results}. The paper ends up with concluding remarks.

\section{Adversarial Attacks} \label{Review-of-Attack}
We denote the classifier defined by the DNN with softmax output activation as $\tilde{y} = f(\theta, x)$ for a given image-label pair ($x$, $y$). FGSM finds the adversarial image $x'$ by maximizing the loss $L(x', y) \doteq L(f(\theta, x'), y)$, subject to the $l_\infty$ perturbation constraint $||x'-x||_\infty \leq \epsilon$ with $\epsilon$ be the attack strength. Under the first order approximation i.e., $L(x', y) \approx L(x, y) + \nabla_xL(x, y)^T \cdot (x'-x)$, we have
\begin{equation}
\label{FGSM}
x'=x + \epsilon \, {\rm sign} \cdot \left( \nabla_xL(x, y) \right).
\end{equation}

IFGSM iterates FGSM to generate enhanced attacks, i.e.,
\begin{equation}
\label{IFGSM}
x^{(m)} = x^{(m-1)} + \epsilon \cdot {\rm sign} \left( \nabla_{x} L(x^{(m-1)}, y) \right),
\end{equation}
where $m=1, \cdots, M$, $x^{(0)}=x$ and $x'=x^{(M)}$, with $M$ being the number of iterations.

In practice, we apply the following clipped IFGSM
\begin{equation}
\label{IFGSM-2}
x^{(m)} = {\rm Clip}_{x, \alpha}\left\{x^{(m-1)} + \epsilon \cdot {\rm sign} \left( \nabla_x L(x^{(m-1)}, y) \right)\right\},
\end{equation}
where $\alpha$ is an additional parameter to be specified in the experiments.

\begin{remark}
The above FGSM or IFGSM attack fools DNN to mis-classify the image $x$. To mislead the classification result falls into any given class $t$, with one-hot label $e_t$, we apply the following targeted FGSM schemes
\begin{itemize}
    \item Targted FGSM
    \begin{equation}
    \label{Targeted-FGSM}
    x'=x - \epsilon \, {\rm sign} \cdot \left( \nabla_xL(x, e_t) \right).
    \end{equation}
    
    \item Targted IFGSM
    \begin{equation}
    \label{Targeted-IFGSM}
    x^{(m)} = x^{(m-1)} - \epsilon \cdot {\rm sign} \left( \nabla_{x} L(x^{(m-1)}, e_t) \right),
    \end{equation}
    where $m=1, \cdots, M$, $x^{(0)}=x$ and $x'=x^{(M)}$, with $M$ being the number of iterations.
\end{itemize}
In Eqs.~(\ref{Targeted-FGSM}, \ref{Targeted-IFGSM}), $L(x, e_t)$ is the loss between predicted label of the adversarial image and the targeted label $e_t$.
\end{remark}

Furthermore, we consider the following zero-confidence attack. For a given image-label pair $(x, y)$, and $\forall t\neq y$, CW-L2 searches the adversarial image that will be classified to class $t$ by solving the optimization problem:
\begin{equation}
\label{cwl2-eq1}
\min_{\delta} ||\delta||_2^2,\ \ \ \ \ \text{subject to}\ f(x+\delta) = t, \; x+\delta \in [0, 1]^n,
\end{equation}
where $\delta$ is the adversarial perturbation (for simplicity, we ignore the dependence of $\theta$ in $f$).

The equality constraint in Eq.~(\ref{cwl2-eq1}) is hard to handle, so Carlini et al. consider the surrogate
\begin{equation}
\label{cwl2-eq2}
g(x) = \max\left(\max_{i\neq t}(Z(x)_i) - Z(x)_t , 0\right),
\end{equation}
where $Z(x)$ is the logit vector for an input $x$, i.e., output of the neural nets before the softmax layer. $Z(x)_i$ is the logit value corresponding to class $i$. It is easy to see that $f(x+\delta)=t$ is equivalent to $g(x+\delta)\leq 0$. Therefore, the problem in Eq.~(\ref{cwl2-eq1}) can be reformulated as
\begin{equation}
\label{cwl2-eq4}
\min_{\delta} ||\delta||_2^2 + c \cdot g(x+\delta)\ \ \ \ \ \text{subject to}\ x+\delta \in [0, 1]^n,
\end{equation}
where $c\geq 0$ is the Lagrangian multiplier.

By letting $\delta = \frac{1}{2}\left(\tanh(w)+1\right)-x$, Eq.~(\ref{cwl2-eq4}) can be written as an unconstrained problem. Moreover, Carlini et al. introduce the confidence parameter $\kappa$ into the above formulation. Above all, CW-L2 attacks seek the adversarial image by solving the following problem
\begin{align}
\label{CWL2}
&\min_{w} ||\frac{1}{2}\left(\tanh(w) + 1\right) - x ||_2^2 + c\cdot\\ \nonumber
&\max\left\{-\kappa, \max_{i\neq t}(Z(\frac{1}{2}(\tanh(w))+1)_i)
- Z(\frac{1}{2}(\tanh(w))+1)_t \right\}.
\end{align}

This unconstrained problem can be solved efficiently by the Adam optimizer \cite{Kingma:2014Adam}. 

In the case of CW-L2 attack, we introduce different levels of adversarial attack by setting the adversarial image to
{\small
\begin{equation}
\label{CWL2-eps}
x' = x + \epsilon\left(x^{\rm adv} - x\right),
\end{equation}
}
where $x^{\rm adv}$ is the solution to Eq.~(\ref{CWL2}).

\begin{figure}[h]
\centering
\begin{tabular}{cc}
\includegraphics[width=0.45\columnwidth]{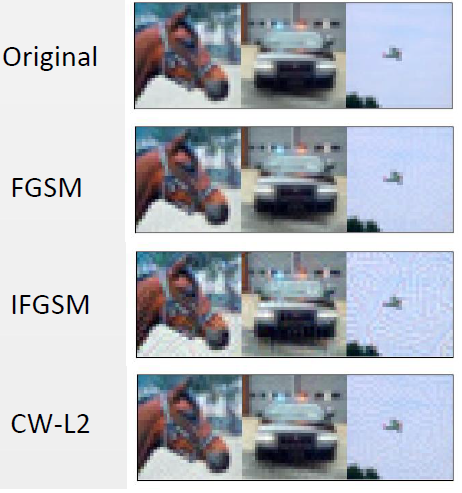}&
\includegraphics[width=0.45\columnwidth]{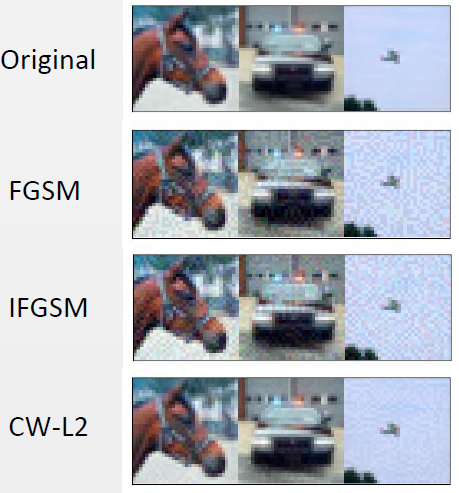}\\
(a)&(b)\\
\end{tabular}
\caption{Samples from CIFAR10. Panel (a): from the top to the last rows show the original, adversarial images by attacking ResNet56 with FGSM, IFGSM, CW-L2 ($\epsilon=0.02$). Panel (b) corresponding to those in panel (a) with $\epsilon=0.08$.}
\label{fig:Adv-images}
\end{figure}

Figure~\ref{fig:Adv-images} depicts three randomly selected images (horse, automobile, airplane) from the CIFAR10 dataset, their adversarials by using different attacks on ResNet56. All attacks fool the classifiers completely on these images. Figure~\ref{fig:Adv-images} (a) shows that the perturbations resulted from FGSM and IFGSM with $\epsilon=0.02$, 10 iterations with $\alpha=0.1$ for IFGSM, is almost imperceptible. For CW-L2, we set the parameters $c=10$ and $\kappa=0$, and run 10 iterations of Adam optimizer with learning rate 0.01. Figure~\ref{fig:Adv-images} (b) shows the corresponding images of (a) with a stronger attack, $\epsilon=0.08$. With a larger $\epsilon$, the adversarial images become more noisy. 

\section{Analysis of the Untargeted FGSM}\label{Untargeted-FGSM}
\subsection{Case 1. A linear layer followed by a softmax output layer}
For an input image $x\in R^{n}$ and its corresponding one-hot label vector $y\in R^{k}$. We consider the simple neural nets
\begin{equation}\label{NeuralNets1-1}
\hat{y}={\rm softmax}(Wx),
\end{equation}
and consider the cross entropy loss
\[L(x, y) = {\rm crossentropy}(\hat{y}, y) = -\sum_{j=1}^{k}y_{j}\cdot\ln\hat{y}_{j} = -\ln{\hat{y}_{s}},\]
where $W\in R^{k\times n}$, $s$ is the index of 1 in the one-hot vector $y$ i.e., $y_{s} = 1$ and $y_{i} = 0~~\forall i\neq s$.
\begin{theorem} \label{Thm-1}
For the neural nets defined in Eq.~(\ref{NeuralNets1-1}) and any input-output pair $(x, y)$. Let $x'$ be the adversarial image of $x$ resulting from FGSM attack, i.e., 
\[x' = x + \epsilon\cdot sign(\nabla_{x}L(x, y)).\]
Then, for $\forall \epsilon > 0$, we have:
\[L(x, y) \leqslant L(x', y). \]
\end{theorem}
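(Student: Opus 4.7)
The plan is to exploit convexity of the cross-entropy loss in $x$ when the model is a linear map followed by a softmax. First I would rewrite the loss explicitly by expanding the softmax: since $\hat y_s = e^{(Wx)_s}/\sum_j e^{(Wx)_j}$, we get
\[
L(x,y) \;=\; -\ln \hat y_s \;=\; \ln\!\left(\sum_{j=1}^{k} e^{(Wx)_j}\right) - (Wx)_s .
\]
This is a log-sum-exp of affine functions of $x$ minus a linear function of $x$, so it is a convex function of $x$ on all of $\mathbb R^n$ (the log-sum-exp is a standard convex function, and the composition with the affine map $x\mapsto Wx$ preserves convexity; subtracting a linear term preserves it too).

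Next I would invoke the first-order convexity inequality
\[
L(x',y) \;\geq\; L(x,y) + \nabla_x L(x,y)^\top (x' - x).
\]
Substituting the FGSM update $x' - x = \epsilon \cdot \mathrm{sign}(\nabla_x L(x,y))$ makes the inner product telescope into an $\ell_1$ norm:
\[
\nabla_x L(x,y)^\top \!\cdot\! \epsilon\,\mathrm{sign}(\nabla_x L(x,y)) \;=\; \epsilon \, \|\nabla_x L(x,y)\|_1 \;\geq\; 0 .
\]
Combining these two inequalities immediately gives $L(x,y) \leq L(x',y)$ for every $\epsilon > 0$, which is the claim.

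I do not expect any genuine obstacle: the only non-mechanical step is the appeal to convexity of log-sum-exp, which is a textbook fact (e.g.\ via computing the Hessian and recognizing it as a covariance matrix of a categorical distribution, which is positive semidefinite). A minor subtlety worth flagging, but not an obstruction, is that $\mathrm{sign}(0)$ is conventionally $0$, so the identity $v^\top \mathrm{sign}(v) = \|v\|_1$ still holds coordinatewise; thus the argument covers the degenerate case where some components of $\nabla_x L$ vanish, and in the fully degenerate case $\nabla_x L = 0$ we recover equality $L(x,y) = L(x',y)$, consistent with the non-strict inequality in the statement.
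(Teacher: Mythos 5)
Your proof is correct, and it reaches the conclusion by a cleaner, more structural route than the paper's. You identify the single decisive property --- that $L(x,y)=\ln\bigl(\sum_{j}e^{(Wx)_j}\bigr)-(Wx)_s$ is a convex function of $x$ (log-sum-exp composed with an affine map, minus a linear term) --- and then the global first-order convexity inequality plus the identity $v^{\top}\mathrm{sign}(v)=\|v\|_1\ge 0$ finishes the argument in two lines. The paper instead unfolds everything by hand: it computes $\partial L/\partial x_i$ explicitly, reformulates the desired inequality as $\exp(\epsilon W\alpha)_s\le\sum_j\mathrm{softmax}(Wx)_j\exp(\epsilon W\alpha)_j$, applies Jensen's inequality to the softmax-weighted average of exponentials, and then verifies $\sum_j\mathrm{softmax}(Wx)_j(W\alpha)_j\ge(W\alpha)_s$ term by term using $t\cdot\mathrm{sign}(t)\ge 0$. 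These are the same mathematics in disguise --- the paper's Jensen step is precisely the convexity of $\exp$, and its final sign computation is precisely your $\|\nabla_x L\|_1\ge 0$ --- but your packaging buys generality: the argument applies verbatim to any loss that is convex in $x$, makes transparent why the result holds for \emph{all} $\epsilon>0$ with no smallness condition (unlike the later ReLU cases, where convexity in $x$ is lost and the paper must restrict $\epsilon$), and correctly handles the degenerate coordinates where the gradient vanishes. The paper's computation, for its part, produces the explicit form of $\alpha_i$ that it then reuses in the two-layer and multi-layer cases, which is the main thing you would still need to write down to extend your argument there.
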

\begin{proof}
For any given $x$, suppose it belongs to class $s$.
The loss can be expressed as:

\begin{equation}\label{2}
\begin{aligned}
L(x, y) &= {\rm crossentropy}({\rm softmax}(Wx), y)\\
&= -{\rm ln}({\rm softmax}(Wx)_{s})\\
&= -{\rm ln}(\frac{{\rm exp}(Wx)_{s}}{{\rm exp}(Wx)_{1}+{\rm exp}(Wx)_{2}+\cdots+{\rm exp}(Wx)_{k}})
\end{aligned}
\end{equation}

Here,
$$
{\rm softmax}(Wx)_{i} = \frac{{\rm exp}(Wx)_{i}}{\sum_{j=1}^{k}{\rm exp}(Wx)_{j}}.
$$
It is easy to get the exact expression of $x'$, in fact,
\begin{equation}
\begin{aligned}
x'_{i} &= x_{i} + \epsilon\cdot {\rm sign}(\frac{\partial}{\partial x_{i}}L(x,y))\\
&= x_{i} + \epsilon\cdot \alpha_{i}.
\end{aligned}
\end{equation}
Here:

\begin{equation}\label{4}
\begin{aligned}
\alpha_{i} &= {\rm sign}(\frac{\partial}{\partial x_{i}}L(x,y))\\
&= {\rm sign}(-\frac{\sum_{j=1}^{k}{\rm exp}(Wx)_{j}}{{\rm exp}(Wx)_{s}}\cdot\\
&~~~\frac{{\rm exp}(Wx)_{s}w_{si}(\sum_{j=1}^{k}{\rm exp}(Wx)_{j})-{\rm exp}(Wx)_{s}\sum_{j=1}^{k}{\rm exp}(Wx)_{j}w_{ji}}{(\sum_{j=1}^{k}{\rm exp}(Wx)_{j})^{2}})\\
&= -{\rm sign}(w_{si}(\sum_{j=1}^{k}{\rm exp}(Wx)_{j})-\sum_{j=1}^{k}{\rm exp}(Wx)_{j}w_{ji})\\
&= {\rm sign}(\sum_{j=1}^{k}{\rm exp}(Wx)_{j}w_{ji}-(\sum_{j=1}^{k}{\rm exp}(Wx)_{j})w_{si}).
\end{aligned}
\end{equation}

In order to prove $L(x, y)\leqslant L(x', y)$, we only need to show

\begin{equation}\label{5}
\begin{aligned}
&\frac{{\rm exp}(Wx)_{s}}{\sum_{j=1}^{k}{\rm exp}(Wx)_{j}}\geqslant \frac{{\rm exp}(Wx')_{s}}{\sum_{j=1}^{k}{\rm exp}(Wx')_{j}}\\
\Leftrightarrow & \frac{{\rm exp}(Wx')_{s}}{{\rm exp}(Wx)_{s}}\leqslant \frac{{\rm exp}(Wx')_{1} + {\rm exp}(Wx')_{2} + \cdots + {\rm exp}(Wx')_{k}}{\sum_{j=1}^{k}{\rm exp}(Wx)_{j}}\\
\Leftrightarrow & \frac{{\rm exp}(Wx')_{s}}{{\rm exp}(Wx)_{s}}\leqslant \sum_{j=1}^{k}{\rm softmax}(Wx)_{j}\cdot\frac{{\rm exp}(Wx')_{j}}{{\rm exp}(Wx)_{j}}\\
\Leftrightarrow & {\rm exp}(\epsilon W\alpha)_{s}\leqslant \sum_{j=1}^{k}{\rm softmax}(Wx)_{j}\cdot {\rm exp}(\epsilon W\alpha)_{j}.
\end{aligned}
\end{equation}

Here, $\alpha=(\alpha_{1}, \alpha_{2}, \cdots, \alpha_{k})^{T}$. Since ${\rm softmax}(Wx)_{i}, 1\leqslant i \leqslant k$, are $k$ non-negative real numbers sum to 1. By the Jensen's inequality, we can get the following lower bound for the right hand side of Eq.~(\ref{5})



\begin{equation}
\begin{aligned}
& {\rm softmax}(Wx)_{1}\cdot {\rm exp}(\epsilon W\alpha)_{1}+\cdots + {\rm softmax}(Wx)_{k}\cdot {\rm exp}(\epsilon W\alpha)_{k}\\
\geqslant & {\rm exp}(\sum_{j=1}^{k}\epsilon\cdot {\rm softmax}(Wx)_{j}(W\alpha)_{j}).
\end{aligned}
\end{equation}

So far, in order to get Eq.~(\ref{5}) proved, we only have to prove the following inequality
\begin{equation}
\sum_{j=1}^{k} {\rm softmax}(Wx)_{j}(W\alpha)_{j}\geqslant (W\alpha)_{s}.
\end{equation}
This is equivalent to :
\begin{equation}
\sum_{j=1}^{k}{\rm exp}(Wx)_{j}(W\alpha)_{j}\geqslant ({\rm exp}(Wx)_{1}+\cdots+{\rm exp}(Wx)_{k})(W\alpha)_{s}.
\end{equation}
In fact, we have

\begin{equation}
\begin{aligned}
& \sum_{j=1}^{k}{\rm exp}(Wx)_{j}(W\alpha)_{j} - ({\rm exp}(Wx)_{1} + \cdots + {\rm exp}(Wx)_{k})(W\alpha)_{s}\\
=& \sum_{j=1}^{k}{\rm exp}(Wx)_{j}(w_{j1}\alpha_{1} + \cdots + w_{jn}\alpha_{n}) - (\sum_{j=1}^{k}{\rm exp}(Wx)_{j})(w_{s1}\alpha_{1} + \cdots + w_{sn}\alpha_{n})\\
=& \sum_{t=1}^{n}\alpha_{t}(\sum_{j=1}^{k}{\rm exp}(Wx)_{j}w_{jt} - (\sum_{j=1}^{k}{\rm exp}(Wx)_{j})w_{st})\\
\geqslant &0.
\end{aligned}
\end{equation}

The last step uses Eq.~(\ref{4}) and the fact that, $x\cdot sign(x)\geqslant 0$.
Now the theorem is proved.
\end{proof}

\subsection{Case 2. Two linear layers followed by softmax output layers, with ReLU Activation in the middle}
\begin{equation}\label{10}
\begin{aligned}
\hat{y} = {\rm softmax}(V\sigma(Wx)),
\end{aligned}
\end{equation}
again, we consider the cross entropy loss:
\[L(x, y) = {\rm crossentropy}(\hat{y}, y) = -\sum_{j=1}^{k}y_{j}\cdot\ln\hat{y}_{j} = -\ln{\hat{y}_{s}},\]
where $W\in R^{l\times n}, V\in R^{k\times l}, x\in R^{n}, y\in R^{k}$. $\sigma$ is the ReLU activation.

\begin{theorem} \label{Thm-3.2}
For the neural nets defined in Eq.~(\ref{10}) and any input-output pair $(x, y)$. Let $x'$ be the adversarial of $x$ by applying FGSM attack to Eq.~(\ref{10}), i.e.,
\[x' = x + \epsilon\cdot {\rm sign}(\nabla_{x}L(x, y)).\]
Suppose every element of $Wx$ is non-zero, if $\epsilon < \frac{|Wx|_{min}}{\|W\|_{\infty}}$, then we have:
\[L(x, y) \leqslant L(x', y). \]
Here, $|Wx|_{min}$ is the smallest element among the absolute values of $Wx$, $\|W\|_{\infty}$ is the infinity norm of matrix $W$, i.e.,
$$
||W||_{\infty} = \max_i(|w_{i1}|+|w_{i2}|+\cdots+|w_{in}|).
$$
\end{theorem}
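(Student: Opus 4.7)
The plan is to reduce the two-layer nonlinear case to the one-layer linear case already handled by Theorem~\ref{Thm-1}, by exploiting the smallness condition on $\epsilon$ to freeze the ReLU activation pattern.

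First I would define the diagonal ``activation matrix'' $D \in \mathbb{R}^{l\times l}$ by $D_{ii}=1$ if $(Wx)_i>0$ and $D_{ii}=0$ if $(Wx)_i<0$; this is well-defined because every entry of $Wx$ is assumed nonzero. On any open set where the sign pattern of $Wz$ agrees with that of $Wx$, we have $\sigma(Wz) = DWz$, so the network behaves like the single-linear-layer model
\begin{equation*}
\hat{y} = \mathrm{softmax}(\tilde{W}z), \qquad \tilde{W} := VDW,
\end{equation*}
which falls under the setting of Theorem~\ref{Thm-1}.

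The key step is to verify that $x'=x+\epsilon\,\mathrm{sign}(\nabla_x L(x,y))$ lies in that open set, i.e.\ that $\mathrm{sign}((Wx')_i)=\mathrm{sign}((Wx)_i)$ for every $i$. Since each coordinate of $x'-x$ equals $\pm\epsilon$, we have $\|x'-x\|_\infty=\epsilon$, and hence by the definition of $\|W\|_\infty$ as the maximum absolute row sum,
\begin{equation*}
\bigl|(Wx' - Wx)_i\bigr| \;\leq\; \Bigl(\textstyle\sum_{j}|w_{ij}|\Bigr)\|x'-x\|_\infty \;\leq\; \|W\|_\infty\cdot\epsilon \;<\; |Wx|_{\min} \;\leq\; |(Wx)_i|.
\end{equation*}
This forces $(Wx')_i$ to retain the sign of $(Wx)_i$, so $\sigma(Wx')=DWx'$ with the same $D$ used for $x$. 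I expect this sign-preservation argument to be the main (though short) obstacle, since it is where the quantitative hypothesis $\epsilon<|Wx|_{\min}/\|W\|_\infty$ is actually used.

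Finally I would observe that the FGSM direction computed for the two-layer network at $x$ coincides with the FGSM direction for the effective linear-plus-softmax network $\mathrm{softmax}(\tilde{W}z)$: indeed, using $\sigma'((Wx)_i)=D_{ii}$ in the chain rule gives $\nabla_x L(x,y) = \tilde{W}^{\top}\nabla_{\tilde{W}x}L(x,y)$, which is exactly the gradient produced by the one-layer model with weights $\tilde{W}$. Consequently $x'$ is simultaneously the FGSM adversarial example for the original network and for the effective linear network, while by the sign-preservation step both networks give the same loss at $x$ and at $x'$. Applying Theorem~\ref{Thm-1} to the linear network with weights $\tilde{W}$ then yields $L(x,y)\leq L(x',y)$, completing the argument.
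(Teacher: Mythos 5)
Your proposal is correct and follows essentially the same route as the paper: the same sign-preservation estimate $\|W\|_\infty\epsilon<|Wx|_{\min}\leq|(Wx)_i|$ (the paper's Lemma~\ref{lemma3-3}), followed by freezing the ReLU pattern to reduce to the single-linear-layer case of Theorem~\ref{Thm-1}, with your matrix $\tilde{W}=VDW$ being exactly the paper's coefficients $\sum_{a\in A}v_{ja}w_{ai}$ written in matrix form. If anything, you are slightly more careful than the paper in explicitly checking that the FGSM direction for the two-layer network coincides with that of the effective linear network, which is a point the paper leaves implicit.
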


\begin{proof}
Let $T \doteq V\sigma W$, and $\hat{y} = {\rm softmax}(Tx)$. First, we introduce a simple lemma.
\begin{lemma}\label{lemma3-3}
For $j = 1, 2, \cdots, l$, ${\rm sign}(Wx)_{j} = sign(Wx')_{j}$.
\end{lemma}
\begin{proof}
Let $x' = x + \delta$, every element of $\delta$ is one of $\epsilon, -\epsilon$, $0$. Since
\[(Wx')_{j} = (Wx)_{j} +(W\delta)_{j},\]
and :
\begin{equation}
\begin{aligned}
|(W\delta)_{j}| &= |\sum_{i=1}^{n}w_{ji}\delta_{i}|\leqslant \sum_{i=1}^{n}|w_{ji}|\cdot|\delta_{i}|\\
&\leqslant \sum_{i=1}^{n}\epsilon |w_{ji}| = \epsilon\cdot \sum_{i=1}^{n}|w_{ji}|\\
&\leqslant \epsilon \|W\|_{\infty} < |Wx|_{min} \leqslant |(Wx)_{j}|.\\
\end{aligned}
\end{equation}
Therefore: $(Wx)_{j}$ and $(Wx')_{j}$ have the same sign.
\end{proof}

We go back to proof the Theorem.~(\ref{Thm-3.2}). Let us define the following index set
\[A \doteq \{i:(Wx)_{i} > 0\} = \{i:(Wx')_{i} > 0\}.\]
Then we can express the operator $T$ as:
\begin{equation}
\begin{aligned}
(Tx)_{j} &= \sum_{t=1}^{l}v_{jt}\sigma(w_{t1}x_{1}+w_{t2}x_{2}+\cdots+w_{tn}x_{n})\\
&= \sum_{t\in A}v_{jt}(w_{t1}x_{1}+w_{t2}x_{2}+\cdots+w_{tn}x_{n}),
\end{aligned}
\end{equation}
So, the operator $T$ is a linear operator.

The loss function can be rewritten as
\begin{equation}
\begin{aligned}
L(x, y) &= {\rm crossentropy}({\rm softmax}(Tx),y)\\
&= -\ln({\rm softmax}(Tx)_{s})\\
&= -\ln(\frac{{\rm exp}(Tx)_{s}}{{\rm exp}(Tx)_{1} + {\rm exp}(Tx)_{2}+\cdots + {\rm exp}(Tx)_{k}}).
\end{aligned}
\end{equation}
Therefore, this case can be reduced to Case 1, where our model consists of a linear layer followed by softmax output layer. Similar to the previous case, every element of the adversarial image can be written as:
\begin{equation}
\begin{aligned}
x'_{i} &= x_{i} + \epsilon\cdot {\rm sign}(\frac{\partial}{\partial x_{i}}L(x, y))\\
&= x_{i} + \epsilon\cdot \alpha_{i},
\end{aligned}
\end{equation}
Replacing $w_{ji}$ with $\sum_{a\in A}v_{ja}w_{ai}$ in Equation \ref{4}:
\begin{equation}\label{15}
\begin{aligned}
\alpha_{i} &= {\rm sign}(\frac{\partial}{\partial x_{i}}L(x,y))\\
&= {\rm sign}(\sum_{j=1}^{k}{\rm exp}(Tx)_{j}(\sum_{a\in A}v_{ja}w_{ai})-(\sum_{j=1}^{k}{\rm exp}(Tx)_{j})\sum_{a\in A}v_{sa}w_{ai}).
\end{aligned}
\end{equation}
Completely similar to Case 1, after using Jensen's Inequality, we can get this theorem proved. \\

Remark: The usage of the upper bound of $\epsilon$ is to make the operator $T = V\sigma W$ a locally linear transformation. So that the Case 2 is reduced to Case 1 which is previously proved.
\end{proof}

\subsection{Case 3. Multiple linear and softmax output layers, with all ReLU activations}
From Case 2, we note that when $\epsilon$ is small enough, everything inside the ReLU activation does not change the sign after the FGSM attack. Under this condition, the part before the softmax output layer can be treated as a linear function. Therefore, according to our proof in Case 1 where the neural nets consists of only a single linear layer before the softmax, the theorem in Case 2 is undoubtedly correct.
Therefore, we can generalize Case 2 to the neural nets consists of multiple linear and a softmax output layers, with all activations between linear layers set be ReLU.

Denote our training neural network as:
\begin{equation}\label{20}
\hat{y} = softmax(W_{L}\sigma W_{L-1}\sigma\cdots\sigma(W_{1}x))
\end{equation}

and we denote:
\[Tmp_{1}=W_{1}x, ~~Tmp_{2}=W_{2}\sigma(Tmp_{1}),~\cdots~, ~~Tmp_{L} = W_{L}\sigma(Tmp_{L-1})\]

Here, $L$ is the layer number of our neural network. $W_{i}~~(1\leqslant i\leqslant L)$ are matrices and $\sigma$ stands for the ReLU activation function. Besides, $Tmp_{i}~~(1\leqslant i\leqslant L)$ are the intermediate results in the neural network.\\

Therefore we give our general theorem.
\begin{theorem}
For the training neural network \ref{20} and any input and output $x,y$. Let $x'$ is the attacking result of the original input $x$ with FGSM:
\[x' = x + \epsilon\cdot sign(\nabla_{x}L(x,y,W))\]
Then, assume every element of $Tmp_{i}~(1\leqslant i\leqslant L)$ is non-zero and $\epsilon$ is sufficiently small so that every element in all $Tmp_{i}$ do not change their sign after the $\epsilon$-FGSM attack, then we have:
\[L(x,y) \leqslant L(x',y) \]
In other words, as long as $\epsilon$ is sufficiently small, after using FGSM attack to replace the original input $x$ with $x'$, the loss function will surely increase, no matter what $x,y,W_{i}~(1\leqslant i\leqslant L)$ are.
\end{theorem}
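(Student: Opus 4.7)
The plan is to collapse the deep ReLU network to a single linear operator $T$ under the stated non-degeneracy hypothesis and then invoke the argument of Case 1 verbatim. The key observation is that if no coordinate of any $Tmp_i$ changes sign going from $x$ to $x'$, then every ReLU in the network acts as multiplication by a fixed $\{0,1\}$-diagonal mask, and this mask is the same whether we evaluate on $x$ or on $x'$. So the whole pre-softmax map degenerates to one linear map that does not depend on the input in the local neighborhood around $x$ we care about.

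Concretely, I would define diagonal matrices $D_{i}$ for $i=1,\ldots,L-1$ by $(D_i)_{jj}=1$ if $(Tmp_i)_{j}>0$ and $(D_i)_{jj}=0$ otherwise, using $x$ to compute the $Tmp_i$. The non-zero hypothesis makes $D_i$ well defined, and the sign-preservation hypothesis guarantees that the same $D_i$ satisfies $\sigma(Tmp_i(x'))=D_i\,Tmp_i(x')$. Setting
\[
T \;\doteq\; W_{L}D_{L-1}W_{L-1}D_{L-2}\cdots D_{1}W_{1},
\]
we get $\hat{y}(x)={\rm softmax}(Tx)$ and $\hat{y}(x')={\rm softmax}(Tx')$ with one and the same linear $T$. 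Moreover, since each $\sigma$ is differentiable at every coordinate of the relevant $Tmp_i$ (all of them are non-zero), the chain rule gives $\nabla_{x}L(x,y)=\nabla_{x}\left[{\rm crossentropy}({\rm softmax}(Tx),y)\right]$, so the FGSM step $x'=x+\epsilon\cdot{\rm sign}(\nabla_{x}L(x,y))$ is the FGSM step for the reduced one-layer model.

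From here the argument is essentially a restatement of the proof of Theorem~\ref{Thm-1} with $W$ replaced by $T$. I would: (i) compute $\alpha_{i}={\rm sign}(\partial_{x_{i}}L)$ obtaining the analogue of Eq.~(\ref{4}) with the entries of $T$ in place of $w_{ji}$; (ii) reduce $L(x,y)\leqslant L(x',y)$ to the inequality ${\rm exp}(\epsilon T\alpha)_{s}\leqslant \sum_{j}{\rm softmax}(Tx)_{j}\cdot{\rm exp}(\epsilon T\alpha)_{j}$ as in Eq.~(\ref{5}); and (iii) apply Jensen's inequality to the convex function ${\rm exp}$ to further reduce the problem to $\sum_{j}{\rm softmax}(Tx)_{j}(T\alpha)_{j}\geqslant (T\alpha)_{s}$, which is exactly the inequality closed out at the end of Case~1 using the sign definition of $\alpha$ together with $t\cdot{\rm sign}(t)\geqslant 0$. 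The main obstacle is not really present in this theorem, because the hardest part of the multi-layer analysis, namely guaranteeing that every ReLU pattern is frozen, has been pushed into the hypothesis on $\epsilon$; the real work happens in Case~2, where a quantitative bound $\epsilon<|Wx|_{min}/\|W\|_\infty$ was needed to enforce that sign-preservation for one hidden layer. If one wanted a fully explicit $\epsilon$ threshold here one would iterate that Case~2 bound layer by layer, but for the statement as posed the reduction above is the complete proof.
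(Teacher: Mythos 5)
Your proposal is correct and follows essentially the same route as the paper: freeze the ReLU activation pattern (which the hypothesis on $\epsilon$ guarantees), collapse the pre-softmax map to a single linear operator $T$, and rerun the Case~1 argument (sign computation, Jensen's inequality, and $t\cdot\mathrm{sign}(t)\geqslant 0$) with $T$ in place of $W$. Your write-up is in fact more explicit than the paper's two-sentence reduction, since you spell out the diagonal mask matrices $D_i$ and verify that the FGSM gradient of the full network coincides with that of the reduced linear model.
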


The proof of the theorem is similar to the one in Case 2. Since every element in all $Tmp_{i}$ do not change their sign, therefore $W_{L}\sigma\cdots\sigma(W_{1}x)$ can be seen as a linear function during this attack. In this condition, this problem is equivalent to Case 1. 

Finally, we give an upper bound of $\epsilon$ to satisfy the condition of the theorem above.
\[\epsilon < \min_{1\leqslant j\leqslant L}\frac{|Tmp_{j}|_{min}}{\|W_{j}\|_{\infty}}\]

\subsection{Remark on the convolutional layer}\label{sec3-4}
In all the cases above, there is an assumption that all the layers are linear. We can also generalize this to convolutional layers. Note for a convolutional layer $h$ and an input matrix $X$, when we flatten $X$ and the result $h(X)$ to a column vector, $h$ is also linear. 
Therefore the convolutional layers can be regarded as a linear layer when we flatten all the input and intermediate matrices.

Overall, when the neural nets consists of linear or convolutional layers, with a softmax output layer and ReLU activation, then the efficacy of FGSM attack can be guaranteed as long as the $\epsilon$ is sufficiently small.

\section{Targeted Fast Gradient Sign Method} \label{Target-FGSM}
In this section, we consider the efficacy of the targeted adversarial attack with FGSM.
Given any input $x\in R^{n}$ and its corresponding one-hot label vector $y\in R^{k}$, we want to attack it so that the new output falls into the $t$-th category. Considering the following targeted FGSM
\begin{equation}
x' = x - \epsilon\cdot (\nabla_x L(x, e_t)),
\end{equation}
where $e_t$ is the one-hot vector of class $t$.


\subsection{Case 1. A linear layer followed by a softmax output layer}
Again, we first consider a very simple neural nets,
\begin{equation}\label{5-2}
\hat{y}={\rm softmax}(Wx)
\end{equation}
with cross-entropy loss
\[L(x,y) = {\rm crossentropy}(\hat{y}, e_{t}) = -\sum_{j=1}^{k}y_{j}\cdot\ln(e_{t})_{j} = -\ln{\hat{y}_{t}}\]

\begin{theorem}
For the neural nets define by Eq.~(\ref{5-2}), any input-output pair $(x, y)$, and the target label $t$. Let $x'$ be the adversarial of $x$ resulting from the targeted FGSM attack, i.e.,
\[x' = x - \epsilon\cdot {\rm sign}(\nabla_{x}L(x, e_{t})).\]
Under the assumption that $\nabla_{x}L(x, e_{t})$ 
has no zero elements, for any:
\[\epsilon < \min_{i}\frac{1}{\|W\|_{\infty}}\ln{(1+\frac{|\sum_{j=1}^{k}\exp(Wx)_{j}w_{ji}-w_{ti}(\sum_{j=1}^{k}\exp(Wx)_{j})|}{\sum_{j=1}^{k}|w_{ji}-w_{ti}|\cdot \exp(Wx)_{j}})}\]
we have:
\[L(x, e_{t}) \geqslant L(x', e_{t}). \]
\end{theorem}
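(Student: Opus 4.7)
The plan is to parallel the argument of Theorem~\ref{Thm-1}, but with care at the step where Jensen's inequality closed that proof. Differentiating $L(x,e_t)=\ln\sum_{j}\exp(Wx)_j-(Wx)_t$ exactly as in equation~(\ref{4}) (with $t$ in place of $s$) yields
\[\alpha_i={\rm sign}\!\left(\sum_{j=1}^{k}\exp(Wx)_j\,w_{ji}-w_{ti}\sum_{j=1}^{k}\exp(Wx)_j\right),\]
which is precisely the quantity inside the absolute value in the theorem's $\epsilon$ bound. Following the manipulations that produced equation~(\ref{5}), the inequality $L(x',e_t)\le L(x,e_t)$ is equivalent to
\[\sum_{j=1}^{k}\exp(Wx)_j\bigl(1-e^{-\epsilon(\beta_j-\beta_t)}\bigr)\ge 0,\qquad \beta_j=(W\alpha)_j.\]

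The principal new difficulty is that the Jensen-type step that closed Theorem~\ref{Thm-1} now runs in the opposite of the desired direction, so convexity alone is insufficient and the smallness of $\epsilon$ must be exploited quantitatively. My plan is a per-coordinate reduction. For each input coordinate $i$, consider the single-coordinate inequality
\[\sum_{j=1}^{k}\exp(Wx)_j\,e^{-\epsilon\alpha_i(w_{ji}-w_{ti})}\le\sum_{j=1}^{k}\exp(Wx)_j,\]
which I would prove by splitting the index set $\{j\}$ into $J^+_i=\{j:\alpha_i(w_{ji}-w_{ti})\ge 0\}$ (where the exponent is nonpositive, giving a decrease bounded below by a chord-type estimate) and its complement (where the exponent is positive and the exponential is bounded above by $e^{\epsilon\|W\|_\infty}$). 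Balancing the two contributions reduces, after rearrangement, to $(e^{\epsilon\|W\|_\infty}-1)\sum_j\exp(Wx)_j|w_{ji}-w_{ti}|<|\sum_j\exp(Wx)_j(w_{ji}-w_{ti})|$, which is exactly the logarithmic form of the theorem's hypothesis for coordinate~$i$.

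The main obstacle is combining the single-coordinate estimates into the global inequality, since the factorization $e^{-\epsilon(\beta_j-\beta_t)}=\prod_i e^{-\epsilon\alpha_i(w_{ji}-w_{ti})}$ couples the input coordinates multiplicatively rather than additively. I would handle this by a telescoping along the intermediate vectors $x^{[i]}=x-\epsilon\sum_{k\le i}\alpha_k e_k$, applying the single-coordinate estimate at each step and invoking a sign-preservation argument in the spirit of Lemma~\ref{lemma3-3} to ensure that the sign of each partial derivative remains $\alpha_i$ throughout the entire path (so that the per-coordinate bound, originally derived at the softmax weights at $x$, continues to apply at the running point $x^{[i-1]}$). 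The $\min_i$ in the theorem's $\epsilon$ bound is precisely what allows the same uniform value of $\epsilon$ to work at every step of this telescoping and close the argument.
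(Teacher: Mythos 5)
You have correctly identified the real obstruction: a verbatim mirror of Theorem~\ref{Thm-1} fails because Jensen's inequality now runs in the unfavorable direction, and your reformulation of the claim as $\sum_{j}\exp(Wx)_j\bigl(1-e^{-\epsilon(\beta_j-\beta_t)}\bigr)\ge 0$ is correct. Your route, however, is not the paper's, and as outlined it has a gap in the telescoping step. The single-coordinate chord estimate at step $i$ requires the \emph{quantitative} condition $(e^{\epsilon\|W\|_{\infty}}-1)\sum_j p_j|w_{ji}-w_{ti}|<\bigl|\sum_j p_j(w_{ji}-w_{ti})\bigr|$ with weights $p_j=\exp(Wx^{[i-1]})_j$ taken at the running point, not at $x$. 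A sign-preservation argument in the spirit of Lemma~\ref{lemma3-3} transports only the \emph{sign} of $\sum_j p_j(w_{ji}-w_{ti})$ along the path; it does not transport the magnitude inequality, which is genuinely needed here (if the sign alone sufficed, the concavity of $u\mapsto 1-e^{-u}$ would still allow the per-step sum to go negative when some exponents are near $-\epsilon\|W\|_{\infty}$). Since $\exp(Wx^{[i-1]})_j$ may differ from $\exp(Wx)_j$ by a factor up to $e^{\epsilon\|W\|_{\infty}}$, carrying the quantitative condition to the running points costs roughly a factor of $2$ in the admissible $\epsilon$ (one ends up needing $(e^{2\epsilon\|W\|_{\infty}}-1)B_i<|A_i|$ rather than $(e^{\epsilon\|W\|_{\infty}}-1)B_i<|A_i|$), so the argument as written proves the theorem only for a strictly smaller threshold than the one stated.

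The paper avoids all of this with a single reversal of roles: write $x=x'+\epsilon\alpha$ and ask whether the loss \emph{increases} in moving from $x'$ to $x'+\epsilon\alpha$. The desired inequality then becomes literally the chain in Eq.~(\ref{5-8}), i.e.\ Eq.~(\ref{5}) with base point $x'$ and weights ${\rm softmax}(Wx')_j$, so Jensen's inequality runs in the favorable direction exactly as in Theorem~\ref{Thm-1}. The only thing the hypothesis on $\epsilon$ must then deliver is that $\alpha_i$ coincides with the sign of $\sum_j\exp(Wx')_jw_{ji}-w_{ti}\sum_j\exp(Wx')_j$, i.e.\ the sign of the relevant derivative evaluated at $x'$ rather than at $x$ (Eq.~(\ref{5-7})); the stated bound is calibrated precisely for that single sign-preservation step, with no telescoping and no chord estimates. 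I recommend you adopt this reduction: your per-coordinate analysis is salvageable, but it both lengthens the proof and weakens the constant.
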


\begin{proof}
The loss associated with the target label $t$ for any given image-label pair $(x, y)$ is
\begin{equation}
\begin{aligned}
L(x, e_t) &= {\rm crossentropy}({\rm softmax}(Wx), e_{t})\\
&= -\ln({\rm softmax}(Wx)_{t})\\
&= -\ln(\frac{\exp(Wx)_{t}}{\exp(Wx)_{1} + \exp(Wx)_{2}+\cdots+\exp(Wx)_{k}}),
\end{aligned}
\end{equation}
where, $e_t$ is the one hot vector for $t$-th class.


In fact,
\begin{equation}
\begin{aligned}
x'_{i} &= x_{i} - \epsilon\cdot {\rm sign}(\frac{\partial}{\partial x_{i}}L(x,e_{t}))\\
&= x_{i} - \epsilon\cdot \alpha_{i},
\end{aligned}
\end{equation}
where,
\begin{equation}
\begin{aligned}
\alpha_{i} &= {\rm sign}(\frac{\partial}{\partial x_{i}}L(x,e_{t}))\\
&= {\rm sign}(-\frac{\sum_{j=1}^{k}\exp(Wx)_{j}}{\exp(Wx)_{t}}\cdot\\
&~~~\frac{\exp(Wx)_{t}w_{ti}(\sum_{j=1}^{k}\exp(Wx)_{j})-\exp(Wx)_{t}\sum_{j=1}^{k}\exp(Wx)_{j}w_{ji}}{(\sum_{j=1}^{k}\exp(Wx)_{j})^{2}})\\
&= -{\rm sign}(w_{ti}(\sum_{j=1}^{k}\exp(Wx)_{j})-\sum_{j=1}^{k}\exp(Wx)_{j}w_{ji})\\
&= {\rm sign}(\sum_{j=1}^{k}\exp(Wx)_{j}w_{ji}-w_{ti}(\sum_{j=1}^{k}\exp(Wx)_{j})).
\end{aligned}
\end{equation}
According to the assumption, the derivative above is nonzero. Therefore, if $\epsilon$ is sufficiently small, we have
\begin{equation}\label{5-6}
\begin{aligned}
& {\rm sign}(\sum_{j=1}^{k}\exp(Wx)_{j}w_{ji}-w_{ti}(\sum_{j=1}^{k}\exp(Wx)_{j}))\\
=& {\rm sign}(\sum_{j=1}^{k}\exp(Wx')_{j}w_{ji}-w_{ti}(\sum_{j=1}^{k}\exp(Wx')_{j})).
\end{aligned}
\end{equation}
Then, under this condition:
\begin{equation}\label{5-7}
\alpha_{i} = {\rm sign}(\sum_{j=1}^{k}\exp(Wx')_{j}w_{ji}-w_{ti}(\sum_{j=1}^{k}\exp(Wx')_{j})).
\end{equation}

We would like to give a upper bound of $\epsilon$ such that Eq.~(\ref{5-6}) holds. Actually, since every entry of $x$ and $x'$ has a difference at most $\epsilon$, then $|(Wx)_{j}-(Wx')_{j}|\leqslant \|W\|_{\infty}\epsilon$. Therefore:
\[|\exp(Wx)_{j}-\exp(Wx')_{j}|\leqslant \exp(Wx)_{j}(\exp(\|W\|_{\infty}\epsilon)-1).\]
Let
\[A = \sum_{j=1}^{k}\exp(Wx)_{j}w_{ji}-w_{ti}(\sum_{j=1}^{k}\exp(Wx)_{j}),\]
\[B = \sum_{j=1}^{k}\exp(Wx')_{j}w_{ji}-w_{ti}(\sum_{j=1}^{k}\exp(Wx')_{j}).\]
So after the attack, the difference
\begin{equation}
\begin{aligned}
|A-B|&\leqslant\sum_{j=1}^{k}|w_{ji}-w_{ti}|\cdot|\exp(Wx)_{j} - \exp(Wx')_{j}|\\
&\leqslant\sum_{j=1}^{k}|w_{ji}-w_{ti}|\cdot \exp(Wx)_{j}(\exp(\|W\|_{\infty}\epsilon)-1)\\
&= (\exp(\|W\|_{\infty}\epsilon)-1)\sum_{j=1}^{k}|w_{ji}-w_{ti}|\cdot \exp(Wx)_{j}.
\end{aligned}
\end{equation}
When $\epsilon$ satisfies
\begin{equation}
\begin{aligned}
\epsilon &< \frac{1}{\|W\|_{\infty}}\ln{(1+\frac{|A|}{\sum_{j=1}^{k}|w_{ji}-w_{ti}|\cdot \exp(Wx)_{j}})}\\
&= \frac{1}{\|W\|_{\infty}}\ln{(1+\frac{|\sum_{j=1}^{k}\exp(Wx)_{j}w_{ji}-w_{ti}(\sum_{j=1}^{k}\exp(Wx)_{j})|}{\sum_{j=1}^{k}|w_{ji}-w_{ti}|\cdot \exp(Wx)_{j}})}.
\end{aligned}
\end{equation}
In summary, when
\[\epsilon < \min_{i}\frac{1}{\|W\|_{\infty}}\ln{(1+\frac{|\sum_{j=1}^{k}\exp(Wx)_{j}w_{ji}-w_{ti}(\sum_{j=1}^{k}\exp(Wx)_{j})|}{\sum_{j=1}^{k}|w_{ji}-w_{ti}|\cdot \exp(Wx)_{j}})}\]
Equation \ref{5-7} holds.

Now, we show that $L(x, e_t)\geqslant L(x', e_t)$
\begin{equation}\label{5-8}
\begin{aligned}
&\frac{\exp(Wx)_{t}}{\sum_{j=1}^{k}\exp(Wx)_{j}}\leqslant \frac{\exp(Wx')_{t}}{\sum_{j=1}^{k}\exp(Wx')_{j}}\\
\Leftrightarrow & \frac{\exp(Wx)_{t}}{\exp(Wx')_{t}}\leqslant \frac{\exp(Wx)_{1} + \exp(Wx)_{2} + \cdots + \exp(Wx)_{k}}{\sum_{j=1}^{k}\exp(Wx')_{j}}\\
\Leftrightarrow & \frac{\exp(Wx)_{t}}{\exp(Wx')_{t}}\leqslant \sum_{j=1}^{k}{\rm softmax}(Wx')_{j}\cdot\frac{\exp(Wx)_{j}}{\exp(Wx')_{j}}\\
\Leftrightarrow & \exp(\epsilon W\alpha)_{t}\leqslant \sum_{j=1}^{k}{\rm softmax}(Wx')_{j}\cdot \exp(\epsilon W\alpha)_{j}.
\end{aligned}
\end{equation}

This is exactly Equation \ref{5}, which has been previously proved.
\end{proof}

\subsection{Case 2. Two linear and softmax output layers, with ReLU activation}
In this part, we consider the neural nets with two linear and softmax output layers, with ReLU activation
\begin{equation}\label{5-15}
\begin{aligned}
\hat{y} = {\rm softmax}(V\sigma(Wx)),
\end{aligned}
\end{equation}
the loss is
\[L(x, e_t) = {\rm crossentropy}(\hat{y}, e_t) = -\sum_{j=1}^{k}(e_{t})_{j}\cdot\ln\hat{y}_{j} = -\ln{\hat{y}_{t}}.\]
Here: $W\in R^{l\times n}, V\in R^{k\times l}, x\in R^{n}, y\in R^{k}$. $\sigma$ is the ReLU activation function.

\begin{theorem}\label{Thm-10}
For the neural net defined in Eq.~(\ref{5-15}) and any input-output pair $(x, y)$. Let $x'$ be the adversarial of $x$ resulting from targeted FGSM attack
\[x' = x - \epsilon\cdot sign(\nabla_{x}L(x, e_t)),\]
Suppose the derivative above has no zero elements, every element of $Wx$ is non-zero, and $\epsilon$ is smaller than an upper bound which will be written at the end of the proof below, we have
\[L(x, e_t) \geqslant L(x', e_t). \]
\end{theorem}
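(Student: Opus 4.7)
The plan is to reduce this case to Case 1 of the targeted FGSM theorem via the same local linearization trick used in Case 2 of the untargeted FGSM (Theorem \ref{Thm-3.2}). Concretely, I would first invoke Lemma \ref{lemma3-3}: provided $\epsilon < |Wx|_{\min}/\|W\|_\infty$, every coordinate of $Wx$ retains its sign after the FGSM step, so the ReLU activation pattern is unchanged between $x$ and $x'$. Defining the active index set $A = \{i : (Wx)_i > 0\} = \{i : (Wx')_i > 0\}$ and the linear operator $T = V\Sigma_A W$, where $\Sigma_A$ is the diagonal $\{0,1\}$-matrix that zeros out the inactive rows, we have $V\sigma(Wx) = Tx$ and $V\sigma(Wx') = Tx'$.

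The next step is to observe that on the event that activations do not switch, the network reduces locally to a single linear layer followed by softmax, with $T$ playing the role of $W$. The loss then becomes
\[
L(x, e_t) = -\ln\bigl({\rm softmax}(Tx)_t\bigr),
\]
which is exactly the functional form analyzed in Case 1 of Section \ref{Target-FGSM}. Thus the gradient and its sign can be computed by replacing $w_{ji}$ in Eq.~(\ref{5-7}) with $T_{ji} = \sum_{a\in A} v_{ja}w_{ai}$, and the conclusion $L(x, e_t) \geq L(x', e_t)$ will follow by the same Jensen-inequality argument reproduced in Eq.~(\ref{5-8}), once we verify that $\alpha_i$ computed at $x$ equals $\alpha_i$ computed at $x'$.

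The main technical point is pinning down the explicit upper bound on $\epsilon$ that guarantees \emph{both} local linearity and sign-stability of the gradient. The ReLU condition contributes $\epsilon < |Wx|_{\min}/\|W\|_\infty$. The gradient-sign condition, transplanted from Case 1 with $W$ replaced by $T$ and using $\|T\|_\infty \le \|V\|_\infty\|W\|_\infty$, contributes
\[
\epsilon < \min_i \frac{1}{\|T\|_\infty}\ln\!\left(1 + \frac{\bigl|\sum_{j=1}^{k}\exp(Tx)_j T_{ji} - T_{ti}\sum_{j=1}^{k}\exp(Tx)_j\bigr|}{\sum_{j=1}^{k}|T_{ji}-T_{ti}|\,\exp(Tx)_j}\right).
\]
Taking the minimum of these two thresholds gives the explicit bound that the theorem statement alludes to.

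The main obstacle I expect is bookkeeping rather than conceptual: one must check that the Jensen step from Case 1 still goes through when the softmax argument is $Tx$ rather than $Wx$, which is immediate since $T$ is just a fixed linear map on the neighborhood where activations are frozen. Once that is verified, the final inequality $\exp(\epsilon T\alpha)_t \le \sum_j {\rm softmax}(Tx')_j \exp(\epsilon T\alpha)_j$ follows verbatim from Eq.~(\ref{5-8}), completing the proof.
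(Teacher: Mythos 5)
Your proposal matches the paper's proof essentially verbatim: both invoke Lemma \ref{lemma3-3} to freeze the ReLU activation pattern under $\epsilon < |Wx|_{\min}/\|W\|_{\infty}$, reduce the network to the linear operator $T$ restricted to the active set $A$, transplant the Case-1 targeted-FGSM argument with $t_{ji} = \sum_{a\in A}v_{ja}w_{ai}$ in place of $w_{ji}$, and take the final bound as the minimum of the ReLU threshold and the gradient-sign threshold expressed in terms of $\|T\|_{\infty}$. The approach and the resulting bound are the same as the paper's.
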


\begin{proof}
Let $T \doteq V\sigma W$, and $\hat{y} = {\rm softmax}(Tx)$.
According to Lemma \ref{lemma3-3},if:
\[\epsilon < \frac{|Wx|_{min}}{\|W\|_{\infty}},\]
then ${\rm sign}(Wx)_{j} = {\rm sign}(Wx')_{j}$. 

Denote the index set
\[A = \{i:(Wx)_{i} > 0\} = \{i:(Wx')_{i} > 0\}.\]
Then we can express the operator $T$ as:
\begin{equation}
\begin{aligned}
(Tx)_{j} &= \sum_{t=1}^{l}v_{jt}\sigma(w_{t1}x_{1}+w_{t2}x_{2}+\cdots+w_{tn}x_{n})\\
&= \sum_{t\in A}v_{jt}(w_{t1}x_{1}+w_{t2}x_{2}+\cdots+w_{tn}x_{n}),
\end{aligned}
\end{equation}
so, similar to Case 3.2, the operator $T$ can be regarded as a locally linear operator. And we only need to replace the $w_{ji}$ in Case 4.1 with $\sum_{a\in A}v_{ja}w_{ai}$. So once $\epsilon$ is controlled by the upper bound $U = min(U_{1}, U_{2})$, then the theorem is correct. Here:
\[U_{1} = \frac{|Wx|_{min}}{\|W\|_{\infty}}\]
\[U_{2} = \min_{i}\frac{1}{\|T\|_{\infty}}\ln{(1+\frac{|\sum_{j=1}^{k}\exp(Tx)_{j}t_{ji}-t_{ti}(\sum_{j=1}^{k}\exp(Tx)_{j})|}{\sum_{j=1}^{k}|t_{ji}-t_{ti}|\cdot \exp(Tx)_{j}})}\]
where:
\[t_{ji} = \sum_{a\in A}v_{ja}w_{ai}\]
\end{proof}

\subsection{Case 3. Multiple linear and softmax output layers, with all ReLU activations}
From Case 2, we note that when $\epsilon$ is small enough, we can guarantee that everything inside the ReLU activation does not change their sign after the $\epsilon$-FGSM attack. Under this condition, the part before the softmax can be treated as a linear function. Therefore, according to our proof in Case 1 where the neural nets consists of only a single linear layer before the softmax, the theorem in Case 2 remains correct. Therefore, we can generalize Case 2 to the neural nets consists of multiple linear and softmax output layers, with all activation functions between linear layers be ReLU.
Consider the neural nets
\begin{equation}\label{TargetedFGSMNets}
\hat{y} = {\rm softmax}(W_{L}\sigma W_{L-1}\sigma\cdots\sigma(W_{1}x))
\end{equation}

and we denote:
\[{\rm Tmp}_{1}\doteq W_{1}x, ~~{\rm Tmp}_{2} \doteq W_{2}\sigma({\rm Tmp}_{1}), ~\cdots~, ~~{\rm Tmp}_{L} = W_{L}\sigma({\rm Tmp}_{L-1})\]

Here, $L$ is the number of layers. $W_{i}~~(1\leqslant i\leqslant L)$ are matrices and $\sigma$ stands for the ReLU activation. Moreover, ${\rm Tmp}_{i}~~(1\leqslant i\leqslant L)$ are the intermediate results in the neural network.

For the neural nets defined in Eq.~(\ref{TargetedFGSMNets}), we give the following theorem
\begin{theorem}
For the neural net defined by Eq.~(\ref{TargetedFGSMNets}) and any input-output pair $(x, y)$. Let $x'$ be the adversarial of $x$ by the targeted FGSM, i.e.,
\[x' = x - \epsilon\cdot sign(\nabla_{x}L(x, e_{t})).\]
Suppose the derivative above has no zero elements, and every element of ${\rm Tmp}_{i}~(1\leqslant i\leqslant L)$ is non-zero and $\epsilon$ is sufficiently small so that every element in all ${\rm Tmp}_{i}$ do not change their sign after the $\epsilon$-targeted FGSM attack, then we have
\[L(x, e_t) \geqslant L(x', e_t) \]
\end{theorem}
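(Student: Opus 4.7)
The plan is to reduce the $L$-layer case to Case 1 by exploiting the piecewise linearity of ReLU networks: on any region of input space where the sign pattern of every intermediate ${\rm Tmp}_i$ is fixed, the full pre-softmax map $x\mapsto W_L\sigma W_{L-1}\sigma\cdots\sigma W_1 x$ coincides with a single linear map. The hypothesis that no element of any ${\rm Tmp}_i$ changes sign under $x\mapsto x'$ is precisely the statement that $x$ and $x'$ lie in the same linearity region, so a single linear surrogate applies to both points and the problem collapses to the one-layer targeted analysis already carried out.

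Concretely, I would define the active set at layer $i$ by $A_i \doteq \{j:({\rm Tmp}_i(x))_j>0\}$, which by assumption also equals $\{j:({\rm Tmp}_i(x'))_j>0\}$, and let $D_i$ denote the diagonal $0/1$ projector onto $A_i$. Then for $\xi\in\{x,x'\}$,
\[
\hat{y}(\xi) \;=\; {\rm softmax}(T\xi),\qquad T \;\doteq\; W_L D_{L-1} W_{L-1} D_{L-2}\cdots D_1 W_1 .
\]
Because the backward chain rule through each ReLU depends only on the sign pattern at the current point, $\nabla_x L(x,e_t)$ coincides with the gradient of the single-layer network ${\rm softmax}(Tx)$ with weight matrix $T$. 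Hence the deep-network targeted FGSM step $x'=x-\epsilon\,{\rm sign}(\nabla_x L(x,e_t))$ agrees with the Case 1 targeted FGSM step for the weight matrix $T$, and invoking the Case 1 targeted theorem with $W$ replaced by $T$ and target $t$ produces $L(x,e_t)\geq L(x',e_t)$ as soon as $\epsilon$ lies below the Case 1 upper bound instantiated with $T$.

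The main obstacle is producing an explicit $\epsilon$ that simultaneously enforces the sign-preservation hypothesis at every layer. I would handle this by an inductive extension of Lemma \ref{lemma3-3}: since ReLU is non-expansive in the $\ell_\infty$ norm, an input perturbation $\delta$ with $\|\delta\|_\infty\leq\epsilon$ propagates through $i$ layers with $|{\rm Tmp}_i(x')-{\rm Tmp}_i(x)|_\infty \leq \epsilon\prod_{j=1}^{i}\|W_j\|_\infty$, so requiring this to be smaller than $|{\rm Tmp}_i|_{\min}$ at every layer yields a layer-wise sign-preservation bound $\epsilon<\min_{1\leq i\leq L}|{\rm Tmp}_i|_{\min}/\prod_{j=1}^{i}\|W_j\|_\infty$, in the spirit of the bound already recorded at the end of Section \ref{Untargeted-FGSM}. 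The final upper bound on $\epsilon$ is the minimum of this quantity and the Case 1 targeted bound applied to $T$; once it is in place, the theorem follows by the reduction above and no ideas beyond Case 2 are required.
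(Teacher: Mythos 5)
Your proposal is correct and follows essentially the same route as the paper: under the sign-preservation hypothesis the pre-softmax map agrees with a single linear operator $T=W_{L}D_{L-1}W_{L-1}\cdots D_{1}W_{1}$ at both $x$ and $x'$, the gradient (hence the targeted FGSM step) agrees with that of the surrogate one-layer network, and the Case 1 targeted theorem applied to $T$ finishes the argument. Your explicit layer-wise bound $\epsilon<\min_{i}|{\rm Tmp}_{i}|_{\min}/\prod_{j\leq i}\|W_{j}\|_{\infty}$ is actually more careful than the bound $\min_{j}|{\rm Tmp}_{j}|_{\min}/\|W_{j}\|_{\infty}$ recorded in the paper for the untargeted analogue, since it correctly accounts for the amplification of the perturbation as it propagates through successive layers.
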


The proof of the theorem is similar to the one in Case 2. Since every element in all ${\rm Tmp}_{i}$ does not change their sign, therefore $W_{L}\sigma\cdots\sigma(W_{1}x)$ can be seen as a linear function during this attack. In this condition, this problem is equivalent to Case 1. 

Remark: For the same reason as Section \ref{sec3-4}, this conclusion can be also generated to convolutional layers.

To sum up, when our training neural network consists of linear or convolution layers, with a softmax output layer and all acivations ReLU, then the efficacy of targeted FGSM adversarial attack can be guaranteed theoretically as long as the $\epsilon$ is sufficiently small.

\section{CW-L2 Targeted Adversarial Attack} \label{Target-CWL2}
\subsection{Models and attack}
In this section, we consider the simple neural nets which consists of a linear and softmax output layers, i.e., 
\begin{equation}
\hat{y}={\rm softmax}(Wx).
\end{equation}
We consider the simplified CW-L2 attack, in which we relax the constraint that pixel values are between 0 and 1, in this case, 
the perturbation $\delta$ is the solution of the optimization problem
\begin{equation}
\argmin_{\delta}
\|\delta\|_{2}^{2}+c\cdot g(x+\delta),
\end{equation}
where,
\[g(x)=\max\big(\max\limits_{i\neq t}(Z(x)_{i})-Z(x)_{t}, 0\big)\]
and $Z(x)$ is the logit vector of the input $x$, which in our one-layer network, means that:
\[Z(x) = Wx \in R^{k},\]
$c\geqslant 0$ is the Lagrangian multiplier. 


\subsection{Task1. Increasing relative probability of the target label}
\begin{theorem}\label{Thm-7}
For the CW-L2 attack, when the Lagrangian multiplier:
\[c < \min_{j\neq y}\frac{((Wx)_{y}-(Wx)_{j})^{2}}{\|W_{y,:}-W_{j,:}\|_{2}^{2}\cdot((Wx)_{y}-(Wx)_{t})},\]
where, $W_{i, :}$ is the $i$-th row vector of $W\in R^{k\times n}$, $1\leqslant i\leqslant k$, and $y$ is the label of the original input $x$, i.e., $(Wx)_{y}$ is the largest among all the $(Wx)_{i},~(1\leqslant i\leqslant k)$. Then,
\[\frac{P(f(x)=t)}{P(f(x)=y)}\leqslant \frac{P(f(x+\delta)=t)}{P(f(x+\delta)=y)}\]
In other words, the attack increases the ratio between probability of the $t$-th and $y$-th labels.
\end{theorem}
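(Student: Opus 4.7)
The plan begins with an algebraic simplification. Because the softmax normalizer cancels in any probability ratio, one has
\[
\frac{P(f(x+\delta)=t)}{P(f(x+\delta)=y)}=\exp\!\bigl((W_{t,:}-W_{y,:})(x+\delta)\bigr),
\]
so the conclusion of the theorem reduces to the single scalar inequality $(W_{t,:}-W_{y,:})\delta\geq 0$. The rest of the proof only needs to characterize the CW-L2 minimizer $\delta$ well enough to establish this one inequality.

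The next step would be to extract an a priori bound on $\|\delta\|_{2}$ by using $\delta=0$ as a comparison feasible point. At $\delta=0$ the objective equals $c\,g(x)=c((Wx)_{y}-(Wx)_{t})$, since $y$ is the argmax of $Wx$ and $y\neq t$. Optimality of $\delta$ together with $g\geq 0$ forces
\[
\|\delta\|_{2}^{2}\leq c\bigl((Wx)_{y}-(Wx)_{t}\bigr).
\]
Applied through Cauchy--Schwarz, this gives $|(W_{y,:}-W_{j,:})\delta|\leq \|W_{y,:}-W_{j,:}\|_{2}\sqrt{c((Wx)_{y}-(Wx)_{t})}$ for every $j\neq y$. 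The hypothesis on $c$ is precisely the threshold that makes this quantity strictly smaller than $(Wx)_{y}-(Wx)_{j}$, so $(W(x+\delta))_{y}>(W(x+\delta))_{j}$ for every $j\neq y$, including $j=t$. In particular, $y$ remains the strict argmax of $\{(W(x+\delta))_{i}:i\neq t\}$ and $g(x+\delta)>0$, so the objective coincides, in a neighborhood of $\delta$, with the smooth strictly convex quadratic $\|\delta\|_{2}^{2}+c((Wx)_{y}-(Wx)_{t})+c\,(W_{y,:}-W_{t,:})\delta$.

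The final step is first-order optimality: this quadratic is minimized at $\delta=\tfrac{c}{2}(W_{t,:}-W_{y,:})^{T}$, and substituting back yields $(W_{t,:}-W_{y,:})\delta=\tfrac{c}{2}\|W_{t,:}-W_{y,:}\|_{2}^{2}\geq 0$, which together with the reduction of the first paragraph finishes the proof. I expect the main obstacle to lie in the piece-identification step: the single energy bound on $\|\delta\|_{2}^{2}$ has to yield $k-1$ separate comparisons $(W(x+\delta))_{y}>(W(x+\delta))_{j}$, and it is precisely the squared numerator $((Wx)_{y}-(Wx)_{j})^{2}$ in the hypothesized upper bound on $c$ that allows Cauchy--Schwarz to close these comparisons uniformly in $j$.
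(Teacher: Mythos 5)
Your proof is correct, and it reaches the same closed-form minimizer $\delta=\tfrac{c}{2}(W_{t,:}-W_{y,:})^{T}$ and the same final reduction to $(W_{t,:}-W_{y,:})\delta\geq 0$ as the paper, but it gets there by a cleaner route. The paper proceeds by a case split on whether $y$ remains the argmax after the attack: it first derives the candidate minimizer under that assumption, verifies self-consistency through a lemma stated in the contrapositive (if $y$ is \emph{not} the argmax at $x'$ then $\|\delta\|_{2}^{2}>c((Wx)_{y}-(Wx)_{t})$), and then rules out the other case by comparing objective values against the candidate's value $L^{*}$. You instead obtain the energy bound $\|\delta\|_{2}^{2}\leq c((Wx)_{y}-(Wx)_{t})$ a priori for the \emph{true} minimizer by comparison with the feasible point $\delta=0$, which localizes the minimizer in the region where the objective is a smooth quadratic before any candidate is computed; the case analysis then disappears and the formula for $\delta$ drops out of first-order optimality. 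The Cauchy--Schwarz step and the role of the threshold on $c$ are identical in both arguments (your observation that the squared numerator $((Wx)_{y}-(Wx)_{j})^{2}$ is exactly what lets the single energy bound close all $k-1$ comparisons is the same mechanism as the paper's Lemma). What your version buys is the elimination of the consistency check and of the cross-case comparison $L(x')>c((Wx)_{y}-(Wx)_{t})\geq L^{*}$; what the paper's version buys is an explicit formula for the optimal objective value $L^{*}$, which you never need. Both share the same unstated assumptions (existence of a global minimizer, which follows from coercivity, and strictness of $(Wx)_{y}>(Wx)_{j}$ so that the bound on $c$ is a nonempty condition).
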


\begin{proof}
Let $x' \doteq x + \delta$. We introduce a lemma first and then consider two different cases.
\begin{lemma}\label{Lemma2}
If $(Wx')_{y}\leqslant (Wx')_{i}$, then we have
\[\|x'-x\|_{2}^{2}>c\cdot ((Wx)_{y}-(Wx)_{t})\]
\end{lemma}
\begin{proof}
In fact, if $(Wx')_{y}\leqslant (Wx')_{i}$, then
\begin{equation}
\begin{aligned}
& (Wx)_{y}+(W\delta)_{y}\leqslant (Wx)_{i}+(W\delta)_{i}\\
\Rightarrow & (W_{y,:}-W_{i,:})^{T}\delta \leqslant -((Wx)_{y}-(Wx)_{i})\\
\Rightarrow & |(W_{y,:}-W_{i,:})^{T}\delta|\geqslant (Wx)_{y}-(Wx)_{i}
\end{aligned}
\end{equation}
According to the Cauchy-Schwarz Inequality, we have
\[|(W_{y,:}-W_{i,:})^{T}\delta|^{2}\leqslant \|W_{y,:}-W_{i,:}\|_{2}^{2}\cdot \|\delta\|_{2}^{2}\]
Therefore
\begin{equation}
\begin{aligned}
\|\delta\|_{2}^{2} &\geqslant \frac{((Wx)_{y}-(Wx)_{i})^{2}}{\|W_{y,:}-W_{i,:}\|_{2}^{2}}\\
&> c\cdot ((Wx)_{y}-(Wx)_{t})
\end{aligned}
\end{equation}
Till now, the lemma has been proved.
\end{proof}

Let us go back to the theorem, we consider the following two cases.

Case 1. $(Wx')_{y}$ is the largest among all the $(Wx')_{i},~(1\leqslant i\leqslant k)$.
\begin{equation}
\begin{aligned}
L(x') &= \|\delta\|_{2}^{2}+c\cdot g(x')\\
&= \|\delta\|_{2}^{2}+c\cdot\max(\max_{i\neq t}(Wx')_{i}-(Wx')_{t},0)\\
&= \|\delta\|_{2}^{2}+c\cdot((Wx')_{y}-(Wx')_{t})\\
&= \sum_{j=1}^{n}\delta_{j}^{2}+c\cdot\sum_{j=1}^{n}(w_{yj}-w_{tj})(x_{j}+\delta_{j})
\end{aligned}
\end{equation}
In order to minimize $L(x')$, it is easy to find that:
\[\delta_{j} = -\frac{c}{2}(w_{yj}-w_{tj}),~~1\leqslant j\leqslant n\]
On the other hand, if the equation above holds, then:
\begin{equation}
\begin{aligned}
\|\delta\|_{2}^{2} &= \frac{c^{2}}{4}\|W_{y,:}-W_{t,:}\|_{2}^{2}\\
&< c^{2}\cdot\|W_{y,:}-W_{t,:}\|_{2}^{2}\\
&< c((Wx)_{y}-(Wx)_{t})
\end{aligned}
\end{equation}\\

{\it
(The last line is because of the upper bound of $c$.
Let $j = t$, we know that:
\begin{equation}
\begin{aligned}
c &< \frac{((Wx)_{y}-(Wx)_{t})^{2}}{\|W_{y,:}-W_{i,:}\|_{2}^{2}\cdot((Wx)_{y}-(Wx)_{t})}\\
&=\frac{(Wx)_{y}-(Wx)_{t}}{\|W_{y,:}-W_{i,:}\|_{2}^{2}}
\end{aligned}
\end{equation}
That makes the inequality above proved.)
}

Then, according to Lemma.~\ref{Lemma2}, $(Wx')_{y}>(Wx')_{i}$ holds for all $i\neq y$, which is exactly the condition of Case 1. So under Case 1, the minimal value of $L$ is:
\begin{equation}
\begin{aligned}
L^{*} &= c\cdot\sum_{j=1}^{n}(w_{yj}-w_{tj})x_{j}-\frac{c^{2}}{4}\sum_{j=1}^{n}(w_{yj}-w_{tj})^{2}\\
&= c\cdot((Wx)_{y}-(Wx)_{j})-\frac{c^{2}}{4}\sum_{j=1}^{n}(w_{yj}-w_{tj})^{2}
\end{aligned}
\end{equation}

Case 2. $(Wx')_{y}$ is not the largest among all the $(Wx')_{i}~~(1\leqslant i\leqslant k)$.
Then, according to the lemma, we know that:
\[\|\delta\|_{2}^{2}>c((Wx)_{y}-(Wx)_{t})\]
Hence,
\[L(x')\geqslant \|\delta\|_{2}^{2} > c\cdot((Wx)_{y}-(Wx)_{t})\geqslant L^{*}.\]

Therefore, in this Case 2. The $x'$ is not the solution of our optimization problem.

To sum up, the accurate solution of the optimization problem is:
\[\delta_{j} = -\frac{c}{2}(w_{yj}-w_{tj})\]
Then:
\[\frac{P(f(x)=t)}{P(f(x)=y)}=\frac{{\rm softmax}(Wx)_{t}}{{\rm softmax}(Wx)_{y}}=\exp((Wx)_{t}-(Wx)_{y})\]
and the same reason applies:
\[\frac{P(f(x')=t)}{P(f(x')=y)}=\exp((Wx')_{t}-(Wx')_{y})\]
So, we only have to prove the following inequality:
\begin{equation}
\begin{aligned}
& (Wx)_{t}-(Wx)_{y}\leqslant (Wx')_{t}-(Wx')_{y}\\
\Leftrightarrow & \sum_{j=1}^{n}(w_{tj}-w_{yj})(x'_{j}-x_{j})\geqslant 0\\
\Leftrightarrow & \sum_{j=1}^{n}(w_{tj}-w_{yj})\delta_{j}\geqslant 0\\
\end{aligned}
\end{equation}
Since $\delta_{j} = -\frac{c}{2}(w_{yj}-w_{tj})$, the inequality above is obvious.
\end{proof}

\subsection{Task2. Analysis of irrelevant labels}
In this part, we argue that for any third label $k\neq y,t$, the relative probability may either increase or decrease. For $c$ satisfying the condition in Theorem.~\ref{Thm-7}, we can use the same way to prove that
\begin{equation}
\begin{aligned}
& \frac{P(f(x)=k)}{P(f(x)=y)} < \frac{P(f(x')=k)}{P(f(x')=y)}\\
\Leftrightarrow & (Wx)_{k}-(Wx)_{y}<(Wx')_{k}-(Wx')_{y}\\
\Leftrightarrow & (W_{k,:}-W_{y,:})\cdot(x'-x) > 0\\
\end{aligned}
\end{equation}
Since $x'- x = -\frac{c}{2}(W_{y,:}-W_{t,:})^{T}$, the inequality above is equivalent to:
\[(W_{k,:}-W_{y,:})\cdot(W_{y,:}-W_{t,:}) < 0.\]
And it is obvious that the inequality may be either right or wrong. For example, consider a 3D example, let $W_{y, :}=(1, 1, 1), W_{t, :}=(1, 0, 1)$. Then, if $W_{k, :} = (0, 2, 0)$, the inequality is wrong; if $W_{k, :} = (0, -1, 0)$, it is right.

\section{Numerical Results} \label{Numerical-Results}
We verify the above theoretical results by applying the aforementioned adversarial attacks to attack the ResNet56. We train ResNet56 on the CIFAR10 follow the standard procedure used by \cite{ResNet}. 

\subsection{Untargeted FGSM Attack}
We first consider single and multiple iterations of the untergated FGSM attack. In Section.~\ref{Untargeted-FGSM}, we proved that for small attack strength $\epsilon$, the attack will fool the neural nets, this is validated by the numerical results shown in Fig.~\ref{fig:Untarget} (a), when $\epsilon < 0.1$, as $\epsilon$ increases, the classification accuracy decays monotonically. Moreover, Fig.~\ref{fig:Untarget} (b) shows that for a fixed $\epsilon$, as the number of iteration increases, the success rate of adversarial attack increases. This shows that IFGSM is a stronger attack than single step FGSM.

\begin{figure}[h]
\centering
\begin{tabular}{cc}
\includegraphics[width=0.3\columnwidth]{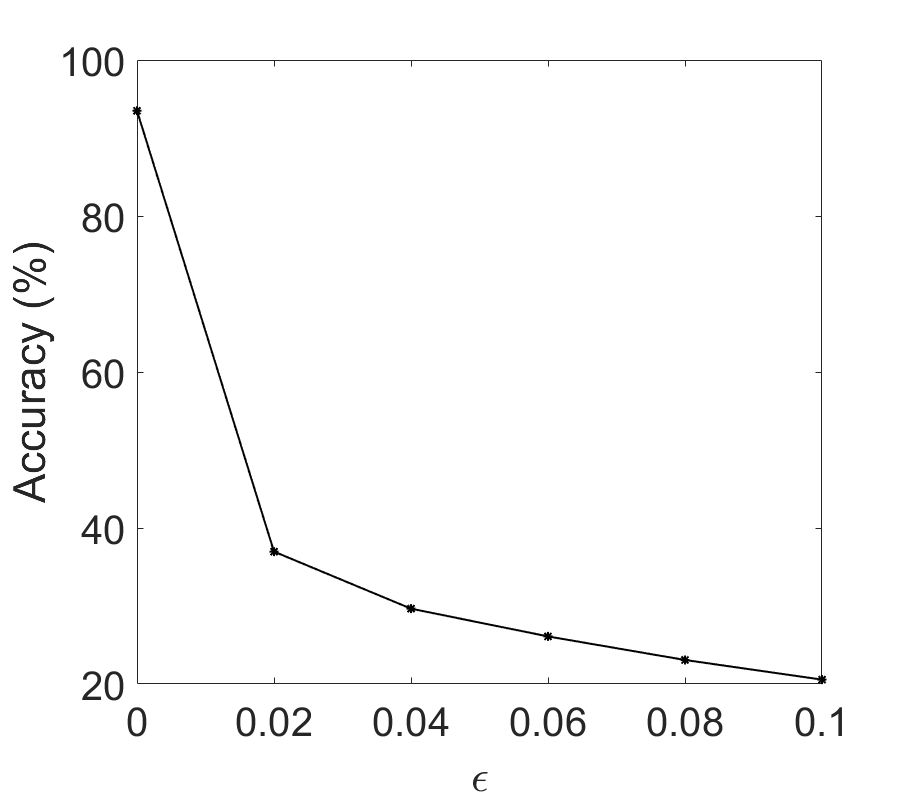} &
\includegraphics[width=0.3\columnwidth]{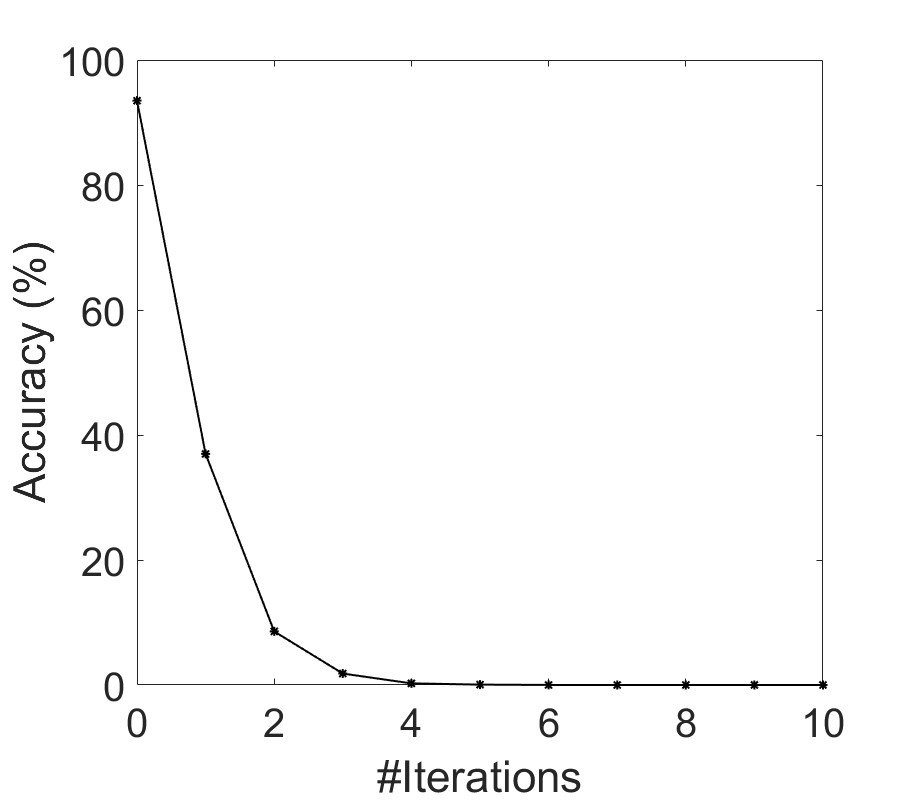}\\
(a)&(b)\\
\end{tabular}
\caption{Testing accuracy under the untargted FGSM attack for ResNet56 on CIFAR10 benchmark. (a): attack strength $\epsilon$ v.s. accuracy with only one iteration. (b): number of iterations v.s. accuracy with $\epsilon=0.02$.}
\label{fig:Untarget}
\end{figure}


\subsection{Targeted FGSM Attack}
In this part, we verify the efficacy of the targeted FGSM attack numerically. We apply IFGSM to attack the cat (labeled 4 in CIFAR10) to dog (labeled 6 in CIFAR10). In all the experiments, we set $\alpha=0.1$. Theoretically, in Section.~\ref{Target-FGSM}, we showed that for any CNN with a softmax output activation and ReLU activation, within the regime of small $\epsilon$, targeted FGSM will fooled neural nets to classify cat to dog. Numerically, again, we consider the ResNet56. Figure.~\ref{fig:Target} (a) shows that for 10 iterations attack, when $\epsilon$ is sufficiently small ($\leq 0.015$), as $\epsilon$ increases, the success rate raises. Once $\epsilon > 0.015$, the success rate decays as $\epsilon$ increases. Furthermore, we consider impact of the number of iterations in the targeted adversarial attack. As shown in Figure.~\ref{fig:Target} (b), the success rate increases monotonically as the number of iterations increases.

\begin{figure}[h]
\centering
\begin{tabular}{cc}
\includegraphics[width=0.3\columnwidth]{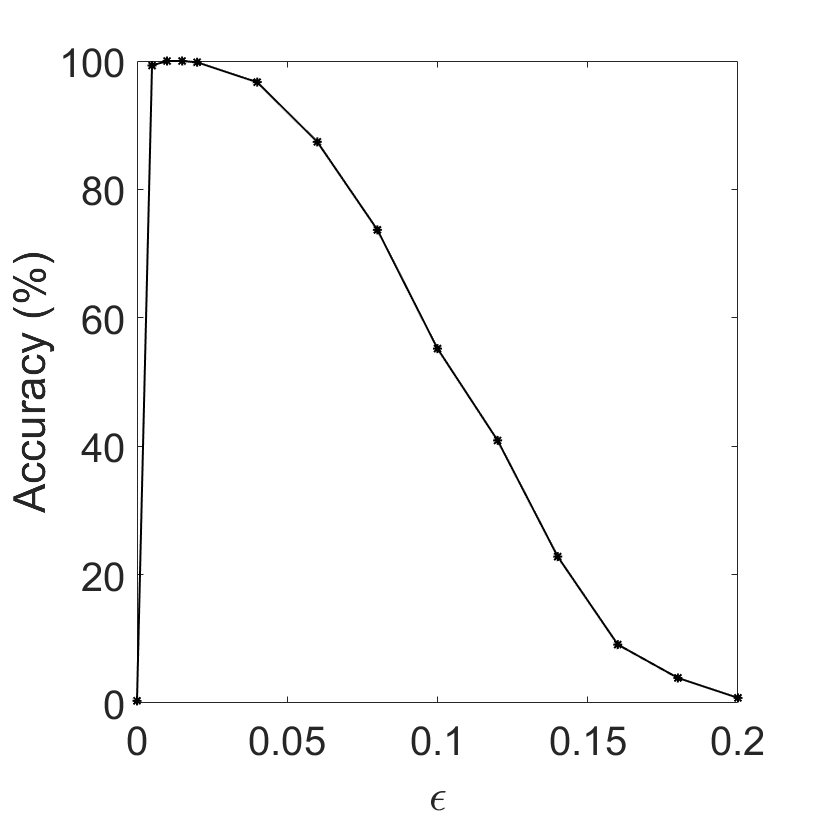} &
\includegraphics[width=0.3\columnwidth]{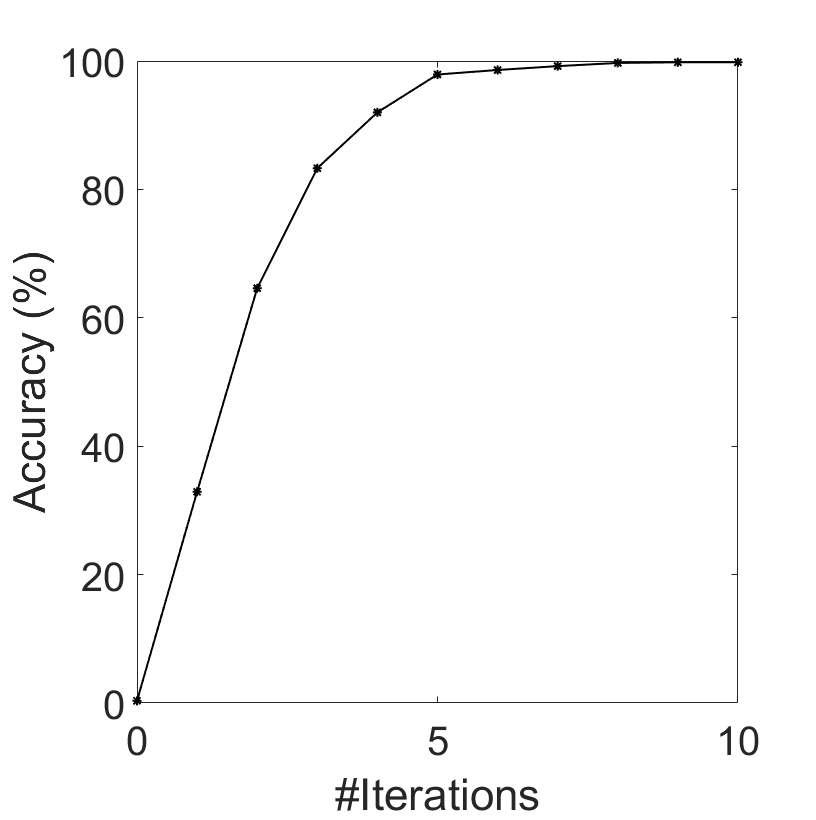}\\
(a)&(b)\\
\end{tabular}
\caption{Success rate on fooling ResNet56 to classify cat to dog under the targted FGSM attack for CIFAR10 benchmark. (a): attack strength $\epsilon$ v.s. success rate with 10 iteration. (b): number of iterations v.s. accuracy with $\epsilon=0.02$.}
\label{fig:Target}
\end{figure}


\subsection{CW-L2 Attack}
We proved, in Theorem.~\ref{Thm-7}, that when CW-L2 attack is applied to fool the neural nets to classify an image to a target class. The rate of the classification probability between the target and the ground-truth labels increases. We continue to attack ResNet56 to mis-classify cat to dog. We apply 10 iterations of Adam optimizer with, $c=10$. $\kappa=0$, and learning rate to be $0.01$ to optimize the CW-L2 attack objective (Eq.~(\ref{CWL2})). In Fig.~\ref{fig:CWL2}, we depict the averaged probability of ResNet56 to classify the cat images before and after the adversarial attack over 1000 images. 750 images successfully attacked to the dog class. From Fig.~\ref{fig:CWL2}, we see that the CW-L2 attack shift the probability density peak from class 4 (cat) to class 6 (dog). It is also interesting to notice that the probability of the images been classified to class 9 has a probability $0.0048$ before attack, while it decreases to $0.004$ after the adversarial attack. This further validates the theoretical conclusion we achieved in Section.~\ref{Target-CWL2}.

\begin{figure}[h]
\centering
\begin{tabular}{c}
\includegraphics[width=0.6\columnwidth]{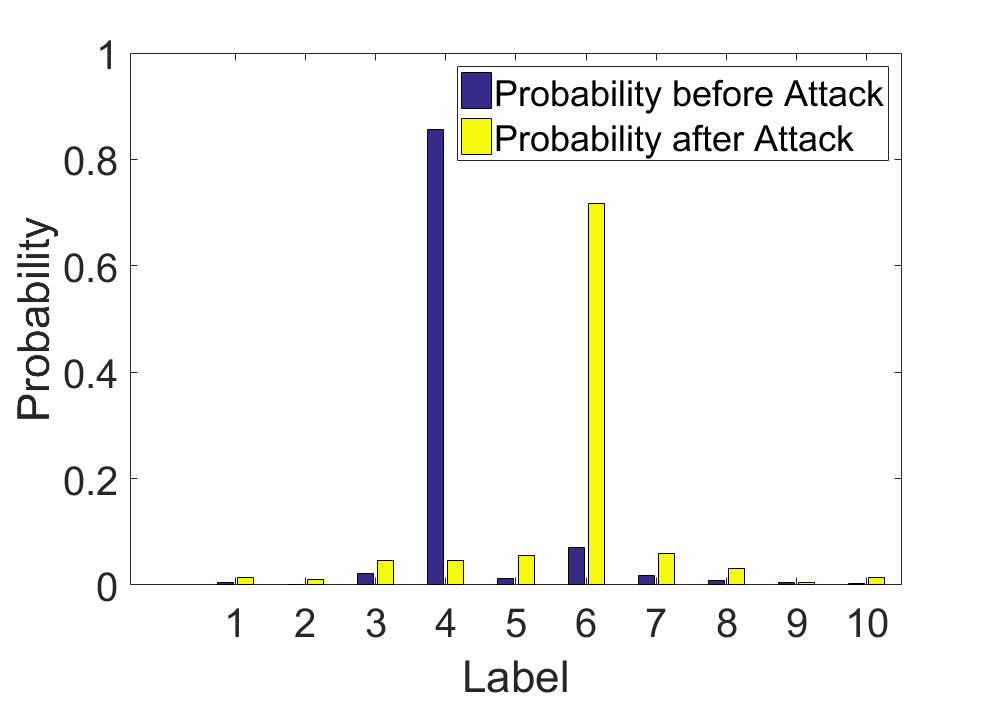}\\
\end{tabular}
\caption{The probability distribution of ResNet56 in classifying cat (labeled with 4) into each classes. Blue bars: before attack. Yellow bars: after CW-L2 attack.}
\label{fig:CWL2}
\end{figure}

\section{Concluding Remarks} \label{Conclusion}
In this paper we proved that the untargeted FGSM can fool any convolutional neural nets (CNN) with ReLU activation for small attack strength; within a specific regime, the targeted FGSM can mislead any CNN with ReLU activation to classify any given image into any prescribed class. For a two-layer neural nets, a linear layer followed by the softmax output activation, we show that the CW-L2 attack increases the ratio of the classification probability between the target and ground truth labels. A large amount of numerical results conform our theoretical results. Quantifying the relation between the attack strength $\epsilon$ and the loss is under our further exploration. Analyzing the influence of the recently proposed Laplacian smoothing gradient descent \cite{Osher:2018LSGD} on improving adversarial robustness is also under our investigation.

\section{Acknowledgments}
This material is based on research sponsored by the Air Force Research Laboratory: DARPA under agreement number FA8750-18-2-0066, MURI under the grant number FA9550-18-1-0502, and FA9550-18-1-0167. And by the U.S. Department of Energy, Office of Science, and by National Science Foundation, under grant numbers DOE-SC0013838
and DMS-1554564, (STROBE). And by the NSF DMS-1737770 and the Simons foundation. And Office of Naval Research: ONR:N00014-18-1-2527. The U.S. Government is authorized to reproduce and distribute reprints for Governmental purposes notwithstanding any copyright notation thereon.

\end{document}